\newcommand{\E}{\mathop{\mathbb{E}}}%
\newcounter{theorem}
\newtheorem{proposition}[theorem]{Proposition}
\title{Sensitivity as a  Complexity Measure for Sequence Classification Tasks}
\date{}
\author{Michael Hahn \\ Stanford University \\ \url{mhahn2@stanford.edu} \And Dan Jurafsky \\ Stanford University \\ \url{jurafsky@stanford.edu} \And Richard Futrell \\ University of California, Irvine \\ \url{rfutrell@uci.edu}}
\begin{document}

\maketitle

\begin{abstract}
We introduce a theoretical framework for understanding and predicting the complexity of sequence classification tasks, using a novel extension of the theory of Boolean function sensitivity.
The sensitivity of a function, given a distribution over input sequences, quantitfies the number of disjoint subsets of the input sequence that can each be individually changed to change the output.
We argue that standard sequence classification methods are biased towards learning low-sensitivity functions, so that tasks requiring high sensitivity are more difficult.
To that end, we show analytically that simple lexical classifiers can only express functions of bounded sensitivity, and we show empirically that low-sensitivity functions are easier to learn for LSTMs.
We then estimate sensitivity on 15 NLP tasks, finding that sensitivity is higher on challenging tasks collected in GLUE than on simple text classification tasks, and that sensitivity predicts the performance both of simple lexical classifiers and of vanilla BiLSTMs without pretrained contextualized embeddings.
Within a task, sensitivity predicts which inputs are hard for such simple models.
Our results suggest that the success of massively pretrained contextual representations stems in part because they provide representations from which information can be extracted by low-sensitivity decoders.
\end{abstract}

\section{Introduction}
What makes some tasks harder and others easier for modern machine learning methods?\footnote{Code is available at \url{https://github.com/m-hahn/sensitivity}.}
In NLP, simple models based on lexical classifiers provide good performance on some tasks, while strong performance on other tasks has been attained only recently with massive pretrained models.
However, there is no unified theoretical framework for understanding these difficulty differences between tasks, or what models might be more or less effective.

Existing complexity metrics provide limited practical insight. 
The Chomsky Hierarchy \citep{chomsky1956three} is a prominent classification of formal languages by complexity,
but it describes asymptotic worst-case complexity and does not provide a measure of how hard it is to achieve high accuracy on realistic task distributions.
Kolmogorov complexity \citep{li1993an} is uncomputable and becomes well-defined only in the asymptotic limit.
Psycholinguistic complexity metrics such as surprisal \citep{hale2001probabilistic} and dependency length \citep{gibson1998linguistic} only capture formal features of the input, without regard to the task.

We propose \textbf{sensitivity} as a theory of complexity for sequence classification tasks, i.e. any task involving learning a function from sequences to labels. The sensitivity of a function, given a distribution over input sequences, quantifies the number of disjoint subsets of the input sequence that can each be individually changed in such a way as to change the output.
Intuitively, high-sensitivity functions are complex because a single change in the input, in many different places, can completely change the output; low-sensitivity functions are simpler because the output is predictable from redundant information in many subsets of the input. We will argue that sensitivity predicts what tasks are easy or hard for modern machine learning methods to learn.

Our notion of sensitivity is grounded in a well-studied theory for Boolean functions \citep{odonnell2014analysis}, which we generalize to natural language.
Unlike measures like Kolmogorov complexity, sensitivity can be estimated on real datasets and single inputs without asymptotic approximations, only requiring a generalized language model such as XLNet~\citep{yang2019xlnet} and a strong model of the task.

In this paper, we argue that sensitivity captures informal notions of complexity both at the level of architectures and on the level of tasks.
First, we show that sensitivity quantifies architectural limitations and inductive biases of various machine learning architectures used in NLP, including both lexical classifiers and vanilla LSTMs without pretrained contextualized embeddings (Section~\ref{sec:bounds-models}).
Second, in a survey of 15 major NLP tasks, we find that sensitivity quantitatively predicts how difficult a task is for simple lexical classifiers and neural models, both across tasks and across different inputs for a single task (Section~\ref{sec:measuring}).
The validity of our methods for quantifying sensitivity is verified using human experiments in Section~\ref{sec:human}. Section~\ref{sec:discussion} discusses the relationship of sensitivity to previous theories of complexity and brittleness in neural networks, and implications for NLP practice. Section~\ref{sec:conclusion} concludes.

\section{Sensitivity}

\subsection{Analysis of Boolean Functions}
We build on notions of sensitivity developed for Boolean functions~\cite{kahn1988the,hatami2010variations,odonnell2014analysis}.
Analysis of Boolean functions is a powerful and rigorous theory with wide-ranging applications in theoretical computer science \citep{odonnell2014analysis}.
We first introduce the relevant notions, and then explain how these concepts can be generalized to the setting of fully general sequence classification.
The \textbf{sensitivity} of a Boolean function $f : \{-1,1\}^n \rightarrow \{-1,1\}$ at a bitstring $x \in \{-1,1\}^n$ is defined as:
\begin{equation}\label{eq:sens-def-boolean}
    s(f,x) = \sum_{i=1}^n 1_{f(x)\neq f(x^{\oplus i})},
\end{equation}
where $x^{\oplus i}$ is the result of flipping the $i$-th bit of $x$.
This describes how many bits of $x$ can be flipped individually to change $f$, or equivalently, how many Hamming neighbors of $x$ have the opposite value of $f$.

The highest possible sensitivity is attained by the  \textsc{Parity} function $f_\text{Parity}(x) := \prod_{i=1}^n x_i$.
Given a string of ``1''s and ``-1''s, this function counts whether the number of negative inputs is even (output $+1$) or odd (output $-1$). For instance, $f_\text{Parity}(1, 1, 1) = f_\text{Parity}(1, -1, -1) = 1$ and $f_\text{Parity}(1, -1, 1) = f_\text{Parity}(-1, 1, 1) = -1$.
The function $f_\text{Parity}$ has the property that flipping any individual bit flips the output.
For instance, given the string ``$1\ 1\ 1$'', changing any of the three input symbols to ``$-1$'' flips the parity of the string from $+1$ to $-1$.
Therefore, for every bitstring $x \in \{-1,1\}^n$, we have $s(f_{Parity}, x) = n$.
It is impossible to approximate $f_\text{Parity}$ beyond chance level with linear functions \citep{minsky1969perceptrons}, or with linear combinations of functions that contain nonlinear interactions between less than $n$ input bits \citep{odonnell2014analysis}.
In this sense, the function $f_\text{Parity}$ is maximally nonlinear.
On the other hand, low-sensitivity functions can be approximated with linear functions or linear combinations of functions that each only combine a few input bits \citep[Thm. 2.38]{odonnell2014analysis}.
Sensitivity also has close connections with other complexity measures such as decision tree depth \citep{nisan1991crew} and the degree of a Boolean function when written as a polynomial.

\subsection{Application to sequence classification}
We argue that this theory can be brought to bear to quantify the complexity of sequence classification tasks.
In this setting, sensitivity measures the nonlinearity of the decision boundary.
Low sensitivity tasks are those where simple methods based on linear combinations of local features are most successful.
For instance, low sensitivity tasks can be solved by bag-of-words classifiers and linear classifiers based on $n$-gram features, which have bounded similarity (as we will make precise in Proposition~\ref{prop:sensitivity-linear-model} below). 
On the other hand, high sensitivity tasks require more sophisticated methods.
We expect that tasks which have proven empirically difficult in the literature, such as those requiring reasoning, correspond to those with high sensitivity, which means that changing different substrings in an input can easily flip the label (e.g. \textsc{Entailment} $\Rightarrow$ \textsc{NonEntailment}).

Testing these ideas requires generalizing sensitivity to functions more akin to those relevant in NLP along several aspects.
One aspect can be dealt with without major changes: NLP tasks are defined on alphabets $\Sigma$ with more than two elements, such as the words of a language.
The theory can be accommodated to such alphabets, leading to a generalized definition of sensitivity applicable when the symbols $X_i$ are distributed independently and uniformly \citep[rephrased based on][Def. 8.22]{odonnell2014analysis}:
\begin{equation}\label{eq:sensitivity-variance-generalized}
	s(f,x) := \sum_{i=1}^n \operatorname{Var}\left(f(X)| \forall j\neq i : X_j = x_{j} \right),
\end{equation}
where the variance measures how much $f$ varies across strings $X \in \Sigma^n$ that agree with $x$ on all except possibly the $i$-th input.
Definition (\ref{eq:sensitivity-variance-generalized}) reduces to (\ref{eq:sens-def-boolean}) if $\Sigma = \{-1,1\}$ and $f : \{-1,1\}^n\rightarrow \{-1,1\}$. 

More challenging is the fact that symbol sequences in language are not distributed uniformly.
For example, in movie review sentiment classification, most inputs will sound like movie reviews (rather than tweets or Wikipedia articles), and almost all will respect the grammatical and statistical properties of the underlying language. 
When defining a generalization of $s(f,x)$ to natural language, we want to focus on those strings $x$ and their Hamming neighbors $x'$ that are typical instances of the problem.
We next describe an adaptation of (\ref{eq:sens-def-boolean}-\ref{eq:sensitivity-variance-generalized}) taking this into account.

\subsection{Formal Definitions}

In order to adapt the idea of sensitivity to the setting of NLP tasks, we introduce a generalized notion called block sensitivity. Block sensitivity is the maximum sensitivity over all possible partitions of the input bits.
Block sensitivity has been studied for Boolean functions as an upper bound on~(\ref{eq:sens-def-boolean}) \citep{nisan1991crew,bernasconi1996sensitivity,hatami2010variations}; we construct a probabilistic version of this notion as a sensitivity measure appropriate to more sequence classification tasks.

Consider a set $\Sigma$ (e.g., the words of a language), with an arbitrary distribution $\Pi$ over the set $\Sigma^*$ of finite sequences of symbols from $\Sigma$.
We formalize classification tasks as functions $f : \Sigma^* \rightarrow [-1,1]$.\footnote{For multi-class problems, we take a family of functions $f$ corresponding to the classes, see Section~\ref{sec:measuring}.} 
Such functions could be binary classifiers $f$ mapping to $\{-1, 1\}$, or they could output a continuous score.
We take the output space to be $[-1,1]$ instead of $[0,1]$ to make our definitions consistent with those from the analysis of Boolean functions. 

The \textbf{subset sensitivity} of the function $f : \Sigma^* \rightarrow \mathbb{R}$ on the point $x \in \Sigma^n$ and the set $P \subset \{1, \dots, n\}$ is
\begin{equation}\label{eq:sensitivity-variance}
	s(f,x,P) := \operatorname{Var}\left(f(X)|X \in x^{\oplus P}\right),
\end{equation}
where $x^{\oplus P}$ denotes the set of all strings $x'$ that agree with $x$ on all indices outside of $P$: 
\begin{equation}
	x^{\oplus P} := \{x' \in \Sigma^{n} : x'_j = x_j \text{ for all }j \in \{1,\dots,n\} - P \},
\end{equation}
and the variance is computed with respect to $\Pi$.
If $P$ is a singleton $\{i\}$, we recover the term inside the sum in (\ref{eq:sensitivity-variance-generalized}): $s(f,x) = \sum_{i=1}^n s(f,x,\{i\})$.

We illustrate this definition in Figure~\ref{fig:sensitivity-single-sentiment} with examples from the Stanford Sentiment Treebank~\citep{socher2013recursive}.
Here, the function $f$ maps movie reviews to the probability that the review is positive, scaled to $[-1, 1]$. 
For each sentence, we select a singleton subset $P$ and show 10 samples from $\Pi$, the distribution over possible substitutions.
In Sentence 1, due to the positive adjectives in the context, the distribution is concentrated on positive adjectives, and so the sensitivity $s(f,x,P) \approx 0$.
In Sentence 2, both positive and negative adjectives are plausible substitutions, and $s(f,x,P) \approx 0.6$.

This example shows how~(\ref{eq:sensitivity-variance}) differs from the vanilla definition~(\ref{eq:sens-def-boolean}) by accounting for the statistical dependencies between words in natural language:
it takes into account that the choice of possible completions for a set $P$ is often constrained by the context given by $x$.
Inputs violating these statistical dependencies (e.g., `a \emph{boring}, witty, seductive movie' for Figure~\ref{fig:sensitivity-single-sentiment}) are unlikely to occur in naturalistic input, and the behavior of $f$ on such unlikely inputs may not impact the difficulty of representing $f$ with high average fidelity.
This motivates considering the variance of $f$ over neighboring strings, instead of, say, the entire range of $f$ over all possible neighboring strings.

Based on subset sensitivity, we introduce the \textbf{block sensitivity} at $x$ as an analogue to (\ref{eq:sens-def-boolean}):
\begin{equation}\label{eq:block-sens}
    bs(f,x) := \max_{k, P_1 \dot\cup \dots \dot\cup P_k} \sum_{i=1}^k s(f,x,P_i),
\end{equation}
where the maximization ranges over all partitionings of $\{1, \dots, n\}$ into disjoint subsets $P_1 \dot\cup \dots \dot\cup P_k$ ($\dot\cup$ denoting disjoint union).
We recover the quantity $s(f,x)$ (\ref{eq:sens-def-boolean}-\ref{eq:sensitivity-variance-generalized}) by restricting subsets $P_i$ to the singletons $\{i\}$; thus, we have
\begin{equation}
    bs(f,x) \ge s(f,x).
\end{equation}
Intuitively, $bs(f,x)$ measures the following:
Given an input $x$, how many disjoint subsequences can be changed individually so as to flip the label?
The formal definition modifies this logic by considering, for each subsequence, not whether changing it to flip the label is possible in principle, but also the probabilities of the different changes.
A useful summary statistic is the \textbf{average block sensitivity}:
\begin{equation}
\widehat{bs}(f) = \E_{x \sim \Pi} \left[ bs(f,x) \right].
\end{equation}

\begin{figure*}
\textbf{Sentence 1:}

\begin{tabular}{lllllll} 
a  \textcolor{blue}{gorgeous} , witty , seductive  movie . ($+1$) \\
\small{seductive ($+1$), brilliant ($+1$), cute ($+1$), sexy ($2\times$, $+1$), shocking ($2\times$, $+1$), stylish ($+1$), charming ($+1$)}
\end{tabular}
        
\textbf{Sentence 2:}

\begin{tabular}{lllllll} 
a painfully \textcolor{blue}{funny}  ode to  bad behavior ($+1$)\\
	\small{bleak ($-1$), small ($-1$), accurate ($2\times$, $-0.2$), sexy (0.13), true (0.5), beautiful (0.9) funny ($2\times$, $+1$), honest ($+1$) }
\end{tabular}
        
  \caption{Subset sensitivity (\ref{eq:sensitivity-variance}) for sentiment analysis, for two inputs from the SST-2 dev set. For each inputs, we select a one-word subsequence (marked in blue, corresponding to sets $\{2\}$ for Sentence 1, and $\{3\}$ for Sentence 2), and show 10 possible substitutions sampled using XLNet  (see Section~\ref{sec:measuring}; ``$2\times$'' indicates samples appearing twice).
  We show sentiment prediction (between $-1.0$ for negative and $+1.0$ for positive sentiment), obtained using  RoBERTa  (see Section~\ref{sec:measuring}), both for the original sentence and each version arising from substituting any of the other adjectives.
  In Sentence 1, due to the presence of positive adjectives in the context, the distribution is concentrated on positive adjectives; $f(x') = +1$ for each sampled $x' \in x^{\oplus P}$. Therefore, subset sensitivity $s(f,x,P)$ is estimated as $0.0$.
In Sentence 2, both positive and negative adjectives are plausible substitutions, and $s(f,x,P) = 0.58$.}\label{fig:sensitivity-single-sentiment}
\end{figure*}

\paragraph{Why consider subsets?}
By considering subsets $P$ instead of single indices $i$, block sensitivity takes into account that words are composed into phrases, and that changing a phrase might change the meaning when changing any individual word cannot.
For instance, exchanging the entire phrase `a gorgeous, witty, seductive' (see Figure~\ref{fig:sensitivity-single-sentiment}) with something negative can make the review negative, whereas exchanging any of the individual adjectives cannot, due to the statistical dependencies between the different words. 
This definition also makes the sensitivity measure robust against tokenization: a more fine-grained tokenization (e.g., into characters) cannot decrease $bs(f,x)$.

\section{Sensitivity Bounds for NLP Methods}\label{sec:bounds-models}

Many statistical NLP methods proposed over the past decades involve linear combinations of features that look at individual words or groups of a few words.
Proposition~\ref{prop:sensitivity-linear-model} shows that such methods can only express functions of bounded block sensitivity, with an upper bound quadratic in the number $k$ of inputs the model looks at simultaneously, independently of input length $n$.

\begin{proposition}\label{prop:sensitivity-linear-model}
Let $f$ be any function $\Sigma^* \rightarrow \mathbb{R}$ parameterized as follows:
	\begin{equation}\label{eq:simple-model}
f(x) := h\left(\frac{1}{n} \sum_{i=1}^{n-k} f_{i,n}(x_i, \dots, x_{i+k})\right),\ \ \ \ (x \in \Sigma^n)
\end{equation}
	where $f_{1,n}, \dots, f_{{n-k},n}$ are functions $\Sigma^k \rightarrow \mathbb{R}^d$ such that $\max_{x \in \Sigma^k} \|f_{i,n}(x)\|_2 \leq C$, and $h : \mathbb{R}^d \rightarrow \mathbb{R}$ is $L$-Lipschitz continuous.
Then, independently of input length $n$, we have
\begin{equation}
    bs(f,x) \leq 2L^2 C^2 k^2.
\end{equation}

\end{proposition}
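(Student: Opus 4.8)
The plan is to control the subset sensitivity $s(f,x,P)$ from (\ref{eq:sensitivity-variance}) for an arbitrary block $P\subseteq\{1,\dots,n\}$ and then sum over a partition as in (\ref{eq:block-sens}). Write $Z_i := f_{i,n}(x_i,\dots,x_{i+k})\in\mathbb{R}^d$ and let $W_i$ be the window of positions read by the $i$-th feature, so that $f(x)=h(\tfrac1n\sum_i Z_i)$ and $\|Z_i\|_2\le C$. The first ingredient is the elementary fact that for any vector-valued random variable $Y$ and any $L$-Lipschitz $h$ one has $\operatorname{Var}(h(Y))\le L^2\operatorname{tr}\operatorname{Cov}(Y)$: introducing an i.i.d.\ copy $Y'$, $\operatorname{Var}(h(Y))=\tfrac12\E[(h(Y)-h(Y'))^2]\le\tfrac{L^2}{2}\E\|Y-Y'\|_2^2=L^2\operatorname{tr}\operatorname{Cov}(Y)$. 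Applying this to $Y=\tfrac1n\sum_i Z_i$ under the distribution $\Pi$ conditioned on $X\in x^{\oplus P}$ yields
\begin{equation}
s(f,x,P)\ \le\ \frac{L^2}{n^2}\,\E\Bigl\|\textstyle\sum_i (Z_i-\E Z_i)\Bigr\|_2^2 .
\end{equation}

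The second, key step is a locality observation. Under $\Pi(\cdot\mid X\in x^{\oplus P})$ the coordinates outside $P$ are frozen to their values in $x$, so the summand $Z_i$ is deterministic unless the window $W_i$ meets $P$; writing $S_P:=\{i: W_i\cap P\neq\emptyset\}$, only the terms with $i\in S_P$ contribute. By the Cauchy--Schwarz inequality $\|\sum_{i\in S_P}(Z_i-\E Z_i)\|_2^2\le |S_P|\sum_{i\in S_P}\|Z_i-\E Z_i\|_2^2$, and since $\E\|Z_i-\E Z_i\|_2^2\le\E\|Z_i\|_2^2\le C^2$, we obtain $s(f,x,P)\le L^2C^2|S_P|^2/n^2$.

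It then remains to sum this over a partition $\{1,\dots,n\}=P_1\dot\cup\cdots\dot\cup P_m$ and bound $\sum_j |S_{P_j}|^2$. Here I would use two elementary counting facts: each position of the string lies in at most $k+1$ windows, so each window $W_i$ meets at most $k+1$ of the pairwise disjoint parts, whence $\sum_j |S_{P_j}|=\sum_i |\{j: W_i\cap P_j\neq\emptyset\}|\le (k+1)(n-k)$; and trivially $|S_{P_j}|\le n-k$ for every $j$. Combining these, $\sum_j |S_{P_j}|^2\le(\max_j|S_{P_j}|)\sum_j|S_{P_j}|\le (n-k)^2(k+1)$, so that $\sum_j s(f,x,P_j)\le \tfrac{L^2C^2}{n^2}(n-k)^2(k+1)\le L^2C^2(k+1)$, which is at most $2L^2C^2k^2$ (using $k+1\le 2k^2$). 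Taking the supremum over partitions gives $bs(f,x)\le 2L^2C^2k^2$, independently of $n$, since the $n^2$ in the denominator is exactly absorbed by the $(n-k)^2$ coming from the number of windows.

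I expect the final step --- the combinatorics of how many disjoint parts a single fixed-width window can intersect, and how the counts $|S_{P_j}|$ aggregate across the partition --- to be the main thing to get right, since this is precisely where the $1/n^2$ normalization must be cancelled to produce a length-independent bound; the Lipschitz/variance inequality and the ``frozen coordinates'' observation are routine once set up. One should also double-check the boundary conventions (whether a feature reads $k$ or $k+1$ positions, and small $k$), which only affects the constant.
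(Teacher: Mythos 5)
Your proof is correct, and while it shares the paper's overall skeleton (bound each block's conditional variance via the Lipschitz property of $h$ plus the locality of the window features, then aggregate over the partition), the aggregation step is genuinely different and in fact sharper. The paper bounds the per-block variance by counting how many summands a block of size $|P_i|$ can touch (at most $k|P_i|$), giving $s(f,x,P_i)\le 2L^2C^2k^2|P_i|^2/n^2$, and then uses $\sum_i|P_i|^2\le\bigl(\sum_i|P_i|\bigr)^2=n^2$; the $k$-dependence thus enters quadratically through the per-block bound. You instead keep the bound in terms of the number $|S_{P_j}|$ of affected windows, $s(f,x,P_j)\le L^2C^2|S_{P_j}|^2/n^2$ via Cauchy--Schwarz, and then control $\sum_j|S_{P_j}|^2$ by double-counting window--block incidences ($\sum_j|S_{P_j}|\le(k+1)(n-k)$ since a width-$(k+1)$ window meets at most $k+1$ disjoint parts, together with $\max_j|S_{P_j}|\le n-k$). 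This yields $bs(f,x)\le L^2C^2(k+1)$, which is linear rather than quadratic in $k$ and implies the stated bound for $k\ge1$ via $k+1\le 2k^2$; your use of the second moment $\E\|Z_i-\E Z_i\|^2\le C^2$ in place of the crude range bound also saves a constant. The convention issue you flag is real but harmless: the paper's own proof says a block of size $|P_i|$ affects ``up to $k|P_i|$'' summands although a position lies in up to $k+1$ windows, so its constant is off by the same $k$ versus $k+1$ bookkeeping, and for $k=0$ the stated bound degenerates in both readings; the paper's usage ($k=1$ for bag-of-embeddings) shows the intended convention, and nothing in your argument depends on it beyond the final comparison $k+1\le 2k^2$, which requires $k\ge 1$.
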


\begin{proof}
Fix a partition $P_1 \dot\cup \dots \dot\cup P_l = \{1, \dots, n\}$.
Write $g(x)$ for the average inside $h(\cdot)$ in (\ref{eq:simple-model}).
	Changing inputs in $P_i$ affects up to $k |P_i|$ of the summands in $g$.
	The $\ell_2$ norm of the sum of these affected terms is bounded by $\frac{Ck |P_i|}{n}$, and thus
\begin{align*}
 Var(f|X \in x^{\oplus P_i}) &= \frac{1}{2} \E_{X, Y \in x^{\oplus P_i}} |f(X)-f(Y)|^2 \\
	&\leq \frac{1}{2} L^2 \E_{X, Y \in x^{\oplus P_i}} \|g(X)-g(Y)\|^2_2 \\
	& \leq 2 \frac{L^2C^2k^2 |P_i|^2}{n^2}.
\end{align*}
Given $\sum_{i=1}^k |P_i|^2 \leq (\sum_{i=1}^k |P_i|)^2 = n^2$, we find
	\begin{equation*}
	\sum_{i=1}^l s(f,x,P_i)   \leq 2 \frac{L^2C^2k^2}{n^2} \sum_{i=1}^k |P_i|^2 \leq 2L^2 C^2k^2.
\end{equation*}
\end{proof}

This result has direct bearing on a wide variety of methods used in NLP, such as averaging word embeddings to construct sentence embeddings~\cite{wieting2015towards, arora2016simple, ethayarajh2018unsupervised}, CNNs~\citep{kim2014convolutional} with average pooling,
 and log-linear models with $n$-gram features.
The parameter $k$ equals $1$ for models averaging word embeddings, the Kernel width for CNNs with average pooling, and $n$ for models using $n$-gram features.
$C$ describes the norm of word embeddings, of the output of a CNN kernel, or of the weights of a linear model.
Lipschitz functions $h$ include the sigmoid function $\sigma$ used in logistic regression and its generalization \textit{softmax}, which are $1$-Lipschitz, and feedforward networks with Lipschitz activations.

RNNs and LSTMs \citep{hochreiter1997long} can express functions of any sensitivity, such as $f_{\text{Parity}}$, because they can express all regular languages~\citep{horne1994bounds}.
On the other hand, transformers~\citep{vaswani2017attention} have asymptotically bounded sensitivity as the input length $n$ increases~\citep[Lemma 5]{hahn2020theoretical}. 

\begin{figure*}
    \centering
\includegraphics[width=0.45\textwidth]{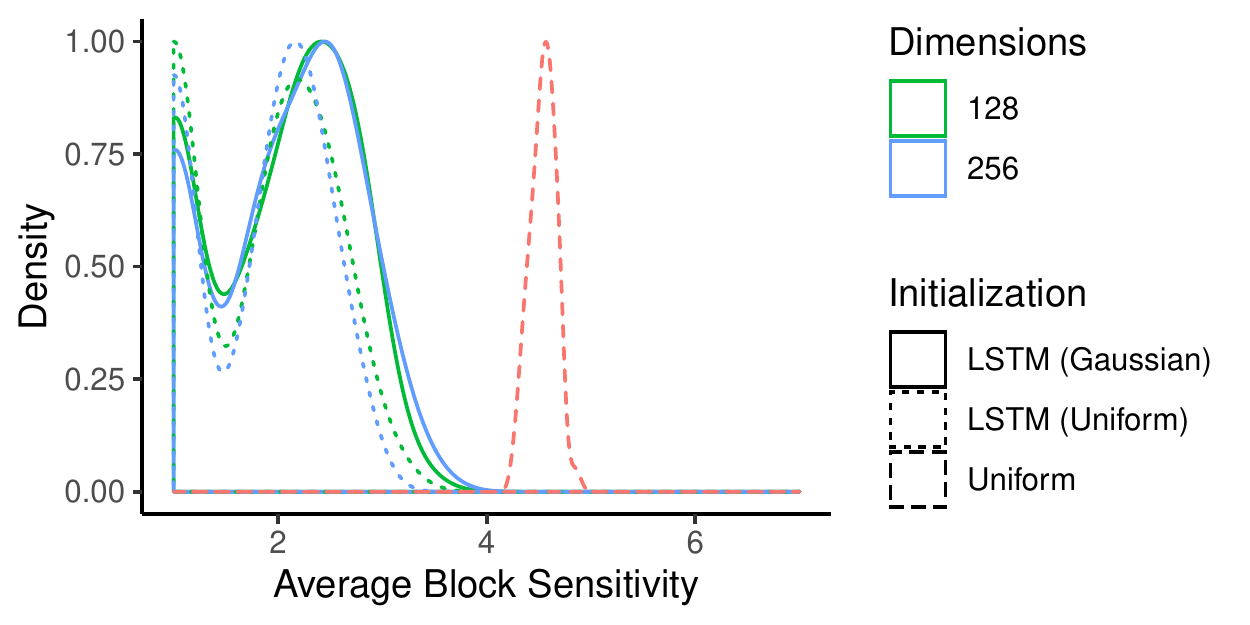}
\includegraphics[width=0.5\textwidth]{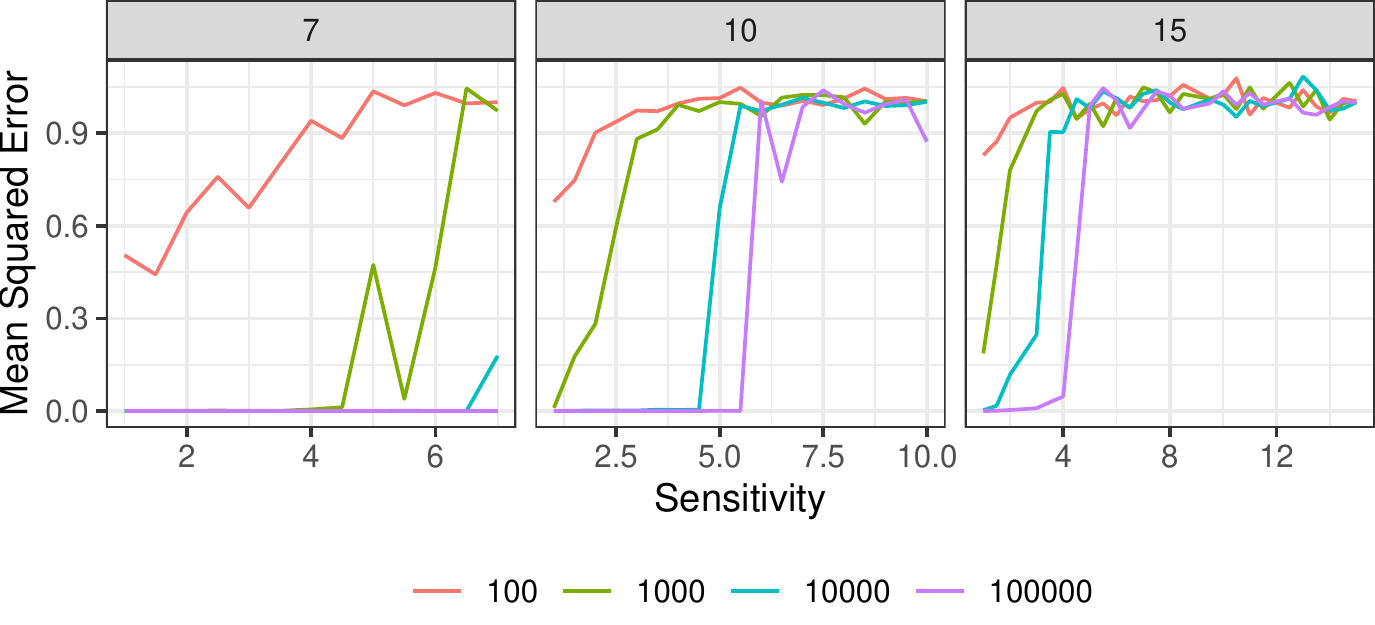}
	\caption{LSTMs are biased towards low-sensitivity functions. (1) Left: Distribution of sensitivity of Boolean functions defined by randomly initialized LSTMs (green and blue) and by the uniform distribution (red) over functions $f : \{-1,1\}^7 \rightarrow \{-1,1\}$. (2) Right: Losses for an LSTM (128 hidden units) fitting random functions $f : \{-1,1\}^{N} \rightarrow \mathbb{R}$ ($N=7, 10, 15$) with given sensitivities, after $10^2$, $10^3$, $10^4$, $10^5$ iterations of training.}
    \label{fig:learnability}
\end{figure*}

We show that even LSTMs have a learning bias towards low-sensitivity functions, despite their theoretical capacity to represent high-sensitivity functions.
We consider functions  $f : \{-1,1\}^n \rightarrow \mathbb{R}$ where inputs are uniformly distributed over $\{-1,1\}^n$. 
We first evaluated average block sensitivity both for randomly initialized LSTMs and for the uniform distribution over Boolean functions $\{-1,1\}^7 \rightarrow \{-1,1\}$. 
We constructed Boolean functions from a randomly initialized LSTM by obtaining a scalar output and making this a binary output $f$ based on a threshold chosen to maximize $\operatorname{Var}(f)$.
We initialized the LSTM's weights uniformly from $[-d^{-0.5}, d^{-0.5}]$ or from a Gaussian with $\sigma^2 = d^{-1}$, where $d$ is the number of hidden units.
Results are shown in Figure~\ref{fig:learnability}, for $d=128$ and $d=256$.
Random Boolean functions have block sensitivity tightly concentrated around $\approx 4.5$, whereas the randomly initialized LSTMs consistently show lower block sensitivity.
This suggests that low-sensitivity functions are `overrepresented' in the LSTM parameter space, echoing a theoretical result for feedforward networks~\citep{de2018deep}.

Second, we directly examined learnability on functions of different sensitivities.
As randomly chosen functions have tightly clustered sensitivity, we sampled\footnote{For each $i = 1, \dots, N$, we sampled functions $f$ where the Fourier spectrum is entirely concentrated on degrees $\{i-1, i, i+1\}$. By \citet[Thm. 2.38]{odonnell2014analysis}, $as(f) \approx i$.} functions with a specific targeted average sensitivity $as(f) = \frac{1}{2^{n}} \sum_{x\in\{-1,1\}^n} s(f,x)$.
We did this for sequence lengths $n=7,10,15$.
For each $i=1, \dots, n$, we constructed five such functions, and then trained an LSTM (128 hidden units) for $10^5$ iterations with Adam (learning rate $0.003$, batch size $32$), and recorded average mean squared error after $10^2$, $10^3$, $10^4$, $10^5$ training iterations.
Training batches and test examples are sampled uniformly from the $2^n$ elements of $\{-1,1\}^n$, without consideration of out-of-sample generalization.
Results are shown in Figure~\ref{fig:learnability}.
For $n=7$, we arrange functions by $\widehat{bs}(f)$; for $n=10, 15$ we take $as(f)$ instead as it can be computed efficiently and is strongly correlated with $\widehat{bs}(f)$ at $n=7$ ($R = 0.95$).
Low-sensitivity functions are learned perfectly with fewer iterations, whereas high-sensitivity functions are not approximated much better than chance even after $10^5$ training iterations.
We note that this is a result on the ability to simply \emph{fit} a function of $2^n$ inputs, not the (harder) task of \emph{generalizing} to unseen input.

\section{Sensitivity and Difficulty of NLP Tasks}\label{sec:measuring}

In Section~\ref{sec:bounds-models}, we provided evidence that sensitivity describes how hard a function is to learn and represent for simple machine learning architectures that do not include pretrained contextual embeddings.
In this section, we argue empirically that sensitivity is successful at capturing intuitive notions of task difficulty: Low-sensitivity tasks are those on which simple classifiers as described in Proposition~\ref{prop:sensitivity-linear-model}, and vanilla LSTMs without pretraining, are relatively successful.
More challenging tasks such as those collected in the GLUE suite~\citep{wang2019glue} have higher sensitivity.

Estimating block sensitivity~(\ref{eq:block-sens}) requires two ingredients:
an estimate of the distributions of neighboring strings $\Pi(X|X \in x^{\oplus P})$, and an estimate of $f$ on this set.
We approximate $\Pi$ via a language model, and $f$ via a trained model that is known to attain strong performance on the task. 
That is, we estimate the sensitivity of a task $f$ by measuring the sensitivity of a model $f'$ that is known to provide close fit to $f$ on the task's input distribution.
In Section~\ref{sec:human}, we report human annotation studies that justify this approximation.

\paragraph{Sampling Neighboring Strings}
For estimating $\Pi(X|X \in x^{\oplus P})$, we leverage the ability of XLNet~\citep{yang2019xlnet} and u-PMLM~\citep{liao2020probabilistically} to model prediction in any order.
We use the pretrained \texttt{xlnet-large-cased} model provided in~\citet{wolf2019huggingface}, and a pretrained u-PMLM model trained on the 1 Billion Word benchmark~\citep{chelba2014one}.
As these models take input on the level of subword tokenizations, we require all samples to consist of the same number of subword symbols as in the span covered by $P$.
To enable meaningful comparison with traditional tokenization and with human intuitions, we only consider subsets $P$ that respect whitespace.
We take 10 samples for each $P$.
For tasks with short inputs (text classification and CoLA), we finetune XLNet on the training set to produce completions more in line with the task-specific input distribution.
Due to compute availability, we did not apply this procedure to other tasks.
Finetuning XLNet slightly increased estimated sensitivity; as we applied it to those tasks expected to have low sensitivity, this procedure potentially makes comparison between tasks more conservative.

\paragraph{Tasks}
First, we consider four text classification tasks: movie review sentiment \citep[MR,][]{pang2005seeing}, sentence subjectivity \citep[SUBJ,][]{pang2004a}, customer reviews sentiment \citep[CR,][]{hu2004mining}, 
and opinion polarity \citep[MPQA,][]{wiebe2005annotating}. 
On these tasks, low-sensitivity models such as CNNs are known to achieve good performance~\citep{kim2014convolutional}.
To approximate the functions $f$, we finetune \texttt{roberta.large.mnli} using \texttt{fairseq} for each of the tasks using a single set of hyperparameters.

Second, we selected all tasks of the GLUE challenge suite~\citep{wang2019glue}, designed to require a good amount of nontrivial language understading.
GLUE contains inference, similarity, and paraphrase tasks (MNLI, \citet{williams2018a}; MRPC, \citet{dolan2005automatically}; QNLI, \citet{rajpurkar2016squad}; QQP; STS-B, \citet{cer2017semeval}; RTE, \citet{dagan2009recognizing}), an NLI version of the Winograd schema challenge \citep{levesque2012the}, linguistic acceptability judgments \citep[CoLA,][]{warstadt2019neural}, and the Stanford sentiment treebank \citep[SST-2,][]{socher2013recursive}.
On many of these tasks, simple BOW baselines perform essentially at chance~\citep{wang2019glue}.
We obtain predictions by finetuning RoBERTa (\texttt{roberta.large.mnli}) using \texttt{fairseq} \citep{ott2019fairseq} using provided hyperparameters\footnote{\url{https://github.com/pytorch/fairseq/blob/master/examples/roberta/README.glue.md}, retrieved June 1, 2020.}.
RoBERTa provides performance close to or exceeding estimated human performance on all GLUE tasks. 
For the Winograd schema challenge, we took the WSC version from SuperGLUE~\citep{wang2019superglue} instead of the NLI reformulation (WNLI) used in GLUE; we used the pretrained model \texttt{roberta.large.wsc}. 
Unlike WNLI, WSC is a single-span task, reducing the number of subsets $P$ considered. 

Third, we considered sequence classification formulations of POS tagging and syntactic parsing.
For 150 dev sentences in the English Web Treebank~\citep{silveira2014a}, we considered the word at the median position of the sentence, and estimated sensitivity of identifying (1) its POS tag in the universal tagset~\citep{petrov2012a}, (2) its Universal Dependencies label~\citep{nivre2016universal}, (3) the relative position of its head, as an integer.
All three tasks are formalized as multi-class classification problems.
We estimated all three computations using the pretrained English dependency parser provided in Stanza~\citep{qi2018universal,qi2020stanza}.

Fourth, we considered two datasets probing syntactic knowledge of anaphor licensing~\citep{marvin2018targeted,hu2020a}, namely tasks 248 and 260 in SyntaxGym \citep{Gauthier:et-al:2020:syntaxgym}.
These tasks ask a model to choose a singular (\emph{himself}) or plural (\emph{themselves}) reflexive after a context where only one is grammatical, but identifying the right reflexive requires syntactic knowledge.
We modeled $f$ using the medium-sized GPT2 model \citep{radford2019language}.
We chose this task because it could be formalized as binary classification problem, and because GPT2 performed better on this task than on the more familiar subject-verb agreement (and on the feminine version with \emph{herself}).

For each task, we estimated sensitivity for at least 150 dev examples, determined by compute availability.
For the syntactic tasks, we estimated sensitivity on the full dataset, as language models are evaluated on these tasks without finetuning.

We considered continuous predictions in $[-1, 1]$ for binary classification tasks, and in $[-1,1]^d$ for multiclass tasks with $d$ classes, obtained from the sigmoid or softmax layer of the relevant models.
For STS-B, we rescale continuous similarity scores to $[-1, 1]$.
For parsing and WSC, we used the discrete output labels provided by the pretrained models, represented as one-hot vectors $\in \{-1,1\}^d$ or binary labels $\in \{-1,1\}$.
For multivariate output $f(x) \in [-1,1]^d$, we define $s(f,x,P)$ by computing it for each of the $d$ coordinates of $f(x)$, and taking the maximum value over these.
The resulting sensitivity estimates describe the behavior of the coordinate of $f$ that has the most nonlinear decision boundary around $x$.

\paragraph{Lower bound approximation}
Calculating block sensitivity (\ref{eq:block-sens}) requires calculating the variance for each of the exponentially many subparts $P$ of the input, intractable for all but short inputs.
We restrict consideration to a polynomial number of subparts, thus obtaining a \emph{lower bound} on full block sensitivity.
We only consider (1) subsets of $1, \dots, 8$ adjacent tokens, and  (2) unions of sets $\{x_{in/7}, \dots, x_{(i+1)n/7-1}\}$ for $i=1, \dots, 7$.
For the parsing tasks, we additionally consider all subsets in a window of 7 tokens around the relevant word.
This bounds the number of subsets by $8n+256$, compared to $2^n$ for full block sensitivity.

\begin{figure}
    \centering
	\footnotesize{GLUE}

    \includegraphics[width=0.45\textwidth]{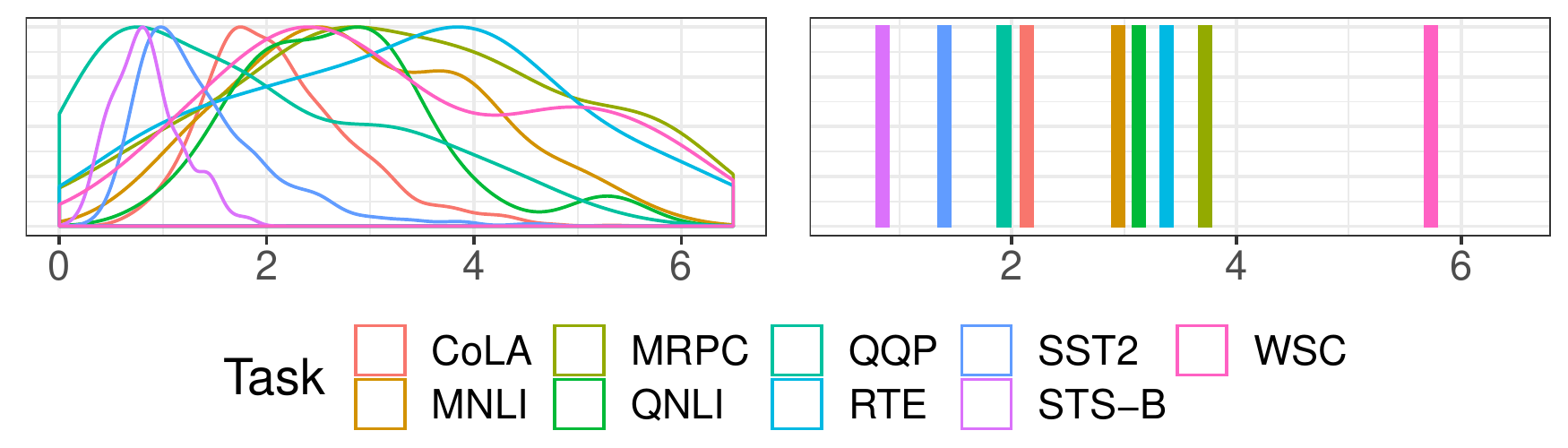}

	Parsing

    \includegraphics[width=0.45\textwidth]{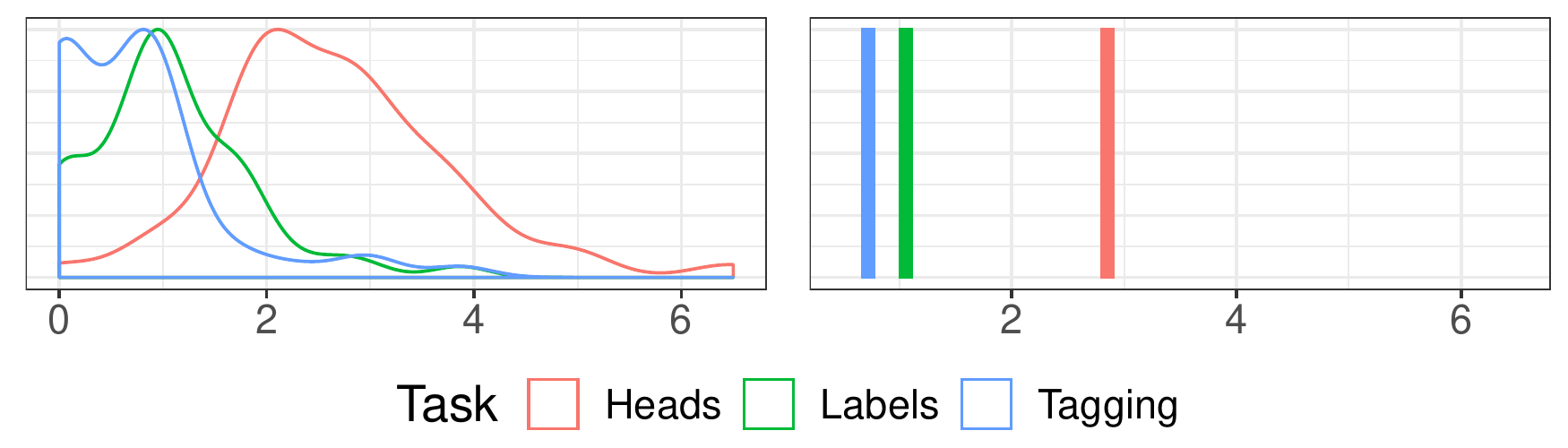}

	Syntax

    \includegraphics[width=0.45\textwidth]{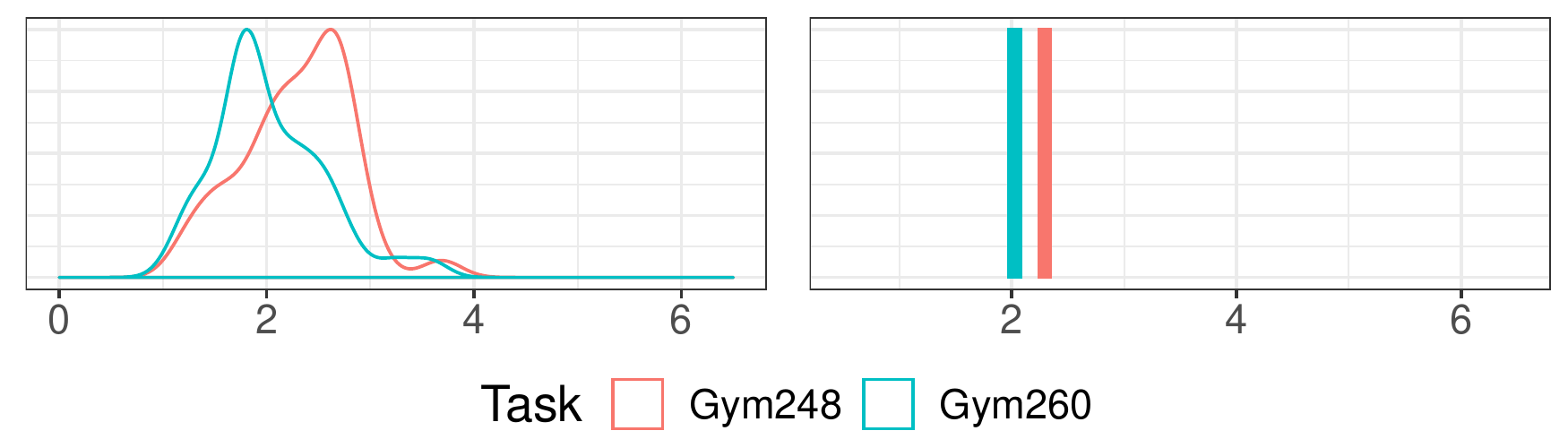}

	Text Classification
    \includegraphics[width=0.45\textwidth]{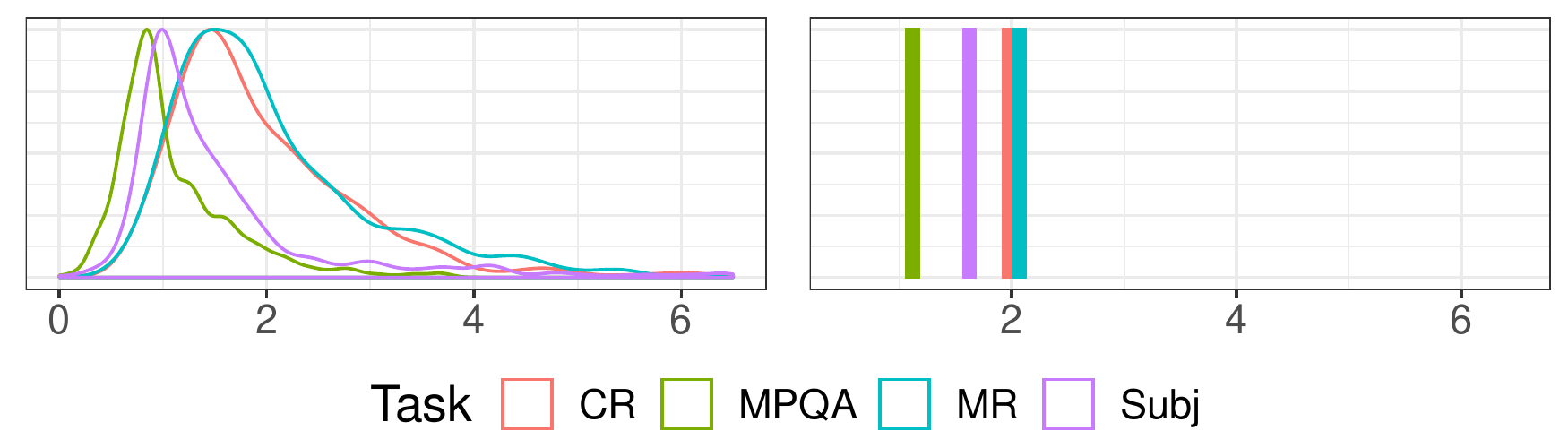}
\caption{Block sensitivity: For each task, we provide a smoothed histogram of the block sensitivity per input (left), and average block sensitivity (right). Estimates obtained using XLNet; compare Figure~\ref{fig:sensitivity-cbow-acc} for u-PMLM.}
    \label{fig:histograms-byTask}
\end{figure}

\subsection{Results}

Across the 15 tasks, XLNet and u-PMLM yielded very similar estimates of average block sensitivity ($R=0.87$, $p=7 \cdot 10^{-6}$). 
In Figure~\ref{fig:histograms-byTask}, we show block sensitivity across tasks as estimated by XLNet.
The left panels show kernel density estimates of the distribution over $bs(f,x)$ over the inputs $x$ from the dev sets.
The right panels show estimated average block sensitivity $\widehat{bs}(f)$.
Text classification tasks have low estimated block sensitivity, with $bs(f,x)$ being concentrated on values lower than three.
For the two syntactic tasks, sensitivity is slightly higher; in comparison to the text classification tasks, the histograms show that these tasks have no datapoints with very low sensitivity.
For parsing, we see a substantial difference between POS tagging and relation labeling on the one hand, and head identification on the other hand.
Identifying tags and relations has lower sensitivity comparable to text classification tasks, whereas identifying the relative position of the head has higher sensitivity.
This makes sense: The relative position of the head is sensitive to intervening words that, while not changing the syntactic relation, change the numerical distance between head and dependent.
Finally, for GLUE, we observe a wide range of sensitivity scores.
SST-2, a sentiment analysis task, has sensitivity very similar to the (other) text classification tasks, as do STS-B (semantic similarity) and QQP (identifying redundant Quora questions).
Other tasks show substantially higher scores; the highest estimated average block sensitivities are attained by RTE, MRPC, and WSC, three tasks designed to require nontrivial reasoning.

To provide insight into these results, we show examples from SST-2 and RTE, with samples from XLNet.
In Figure~\ref{fig:ex-sst2}, we show two examples from SST-2. 
The first example has low sensitivity, as our models find only one sensitive subset.
On the second example, our models find three disjoint sensitive subsets, leading to higher sensitivity.
In Figure~\ref{fig:ex-rte}, we show an example from RTE, consisting of a premise and a hypothesis.
The models identify five highly sensitivity subsequences, such that changing the input on any of these subsequences can flip the label from \textsc{Entailment} to \textsc{NoEntailment}.

\paragraph{Sensitivity and Sentence Length}
Sensitivity might be higher on longer sentences, because they can be partitioned into more sets $P$.
Does this explain away the differences between tasks?
Figure~\ref{fig:sens-by-length} shows per-sentence sensitivity (estimated using XLNet) as a function of sentence length.
The left panel compares sensitivity on simple text classification tasks and on CoLA, a GLUE task consisting of short sentences.
For the simple text classification tasks, sensitivity increases sharply for very short sentences, but then plateaus.
For CoLA, it increases with length.
The right panel shows averaged values $bs(f,x)$ across the tasks in each of the four categories.
Again, sensitivity increases for GLUE and dependency parsing, while it plateaus for text classification.
The two syntactic tasks consist of short and tightly controlled sentences; in relation to their lengths, their sensitivities are particularly high. 

\begin{figure}
        \textbf{Low Sensitivity:}
        
\setlength{\tabcolsep}{0.15em}
            \begin{tabular}{lllllll} 
      &  &    \textcolor{blue}{a gorgeous , witty , seductive} & movie . \\
         1.& & \textcolor{red}{a farce of ideas squanders this} & movie . \\
        \end{tabular}
        
        \textbf{High Sensitivity:}
        
\setlength{\tabcolsep}{0.15em}
        \begin{tabular}{lllllll} %
        & \textcolor{blue}{a painfully} & \textcolor{blue}{funny} & \textcolor{blue}{ode to} & bad & \textcolor{blue}{behavior} . \\
        1. & \textcolor{red}{Not a} & funny & \textcolor{red}{story, just} & bad & behavior . \\
        2. & a painfully & \textcolor{red}{bleak} & ode to & bad & behavior . \\
        3. & a painfully &funny& ode to & bad & \textcolor{red}{movies} . \\
        \end{tabular}

    \caption{Two inputs from SST-2.
	The first one has low block sensitivity (0.93), as our models find only one sensitive subset $P$.
	We show one completion sampled from $x^{\oplus P}$ that flips the label predicted by RoBERTa from \textsc{Positive} to \textsc{Negative}.
	The second input has higher block sensitivity (1.88), with three disjoint sensitive subsets.
	For each subset, we show a completion sampled using XLNet that flips the predicted label. 
    }
    \label{fig:ex-sst2}
\end{figure}

\begin{figure}
    \centering
    \includegraphics[width=0.23\textwidth]{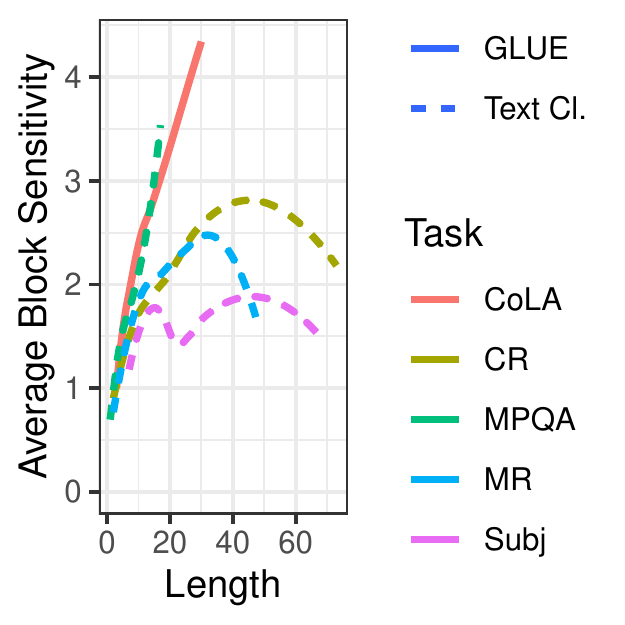}
    \includegraphics[width=0.23\textwidth]{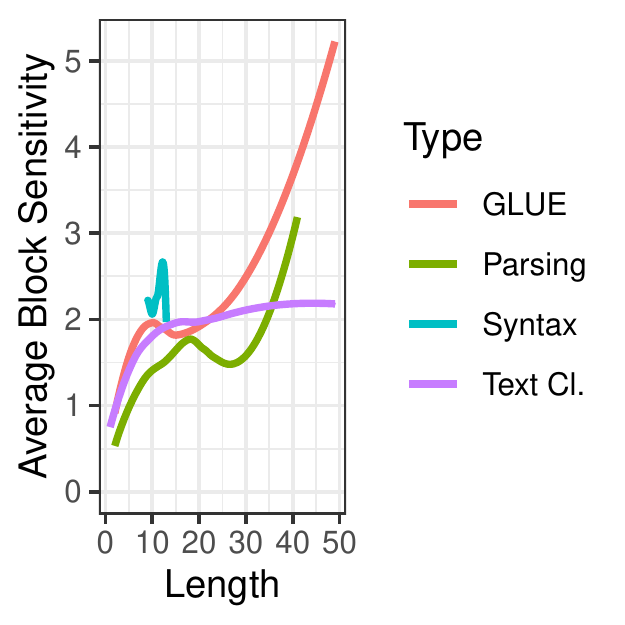}
	\caption{Per-example block sensitivity as a function of sentence length. \textit{Left:} Comparing text classification tasks with CoLA, a single-span GLUE task. \textit{Right:} Block sensitivity across task groups.}
    \label{fig:sens-by-length}
\end{figure}

\begin{figure}
    \centering
	\footnotesize{XLNet}

    \includegraphics[width=0.48\textwidth]{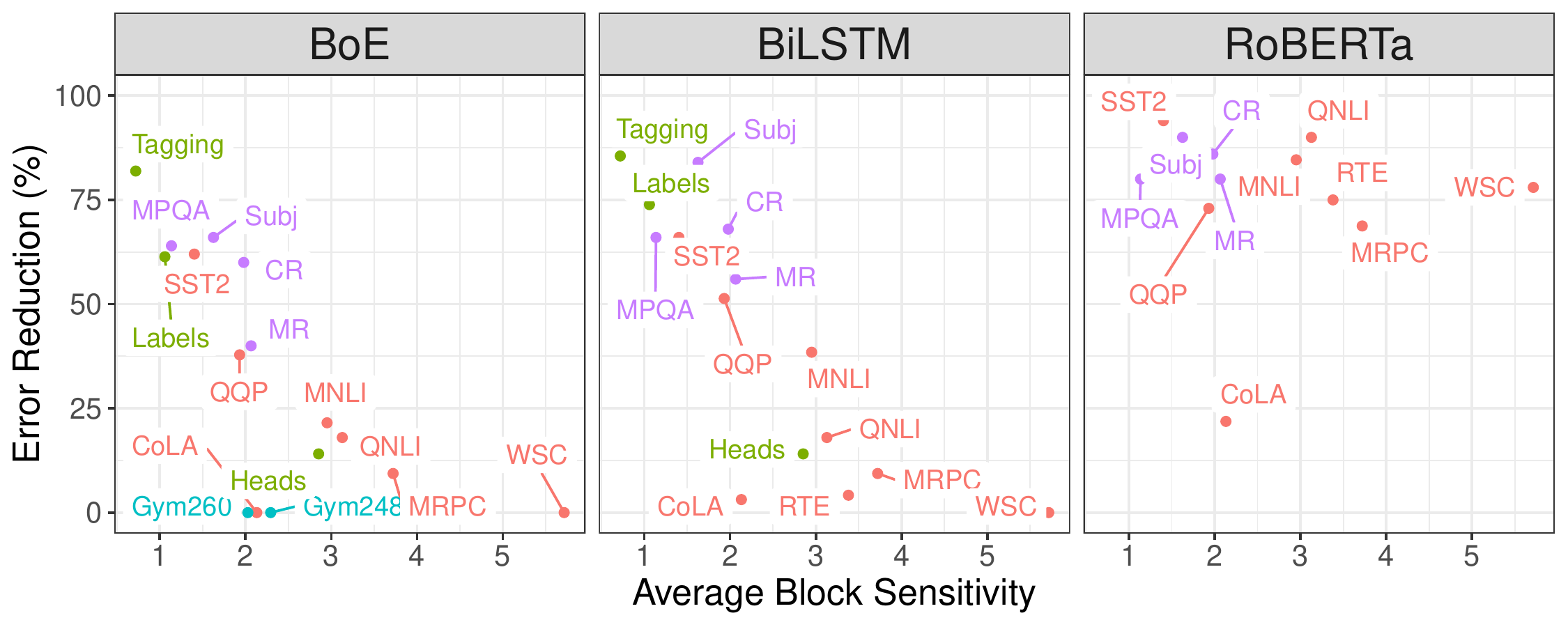}

	\footnotesize{u-PMLM}

    \includegraphics[width=0.48\textwidth]{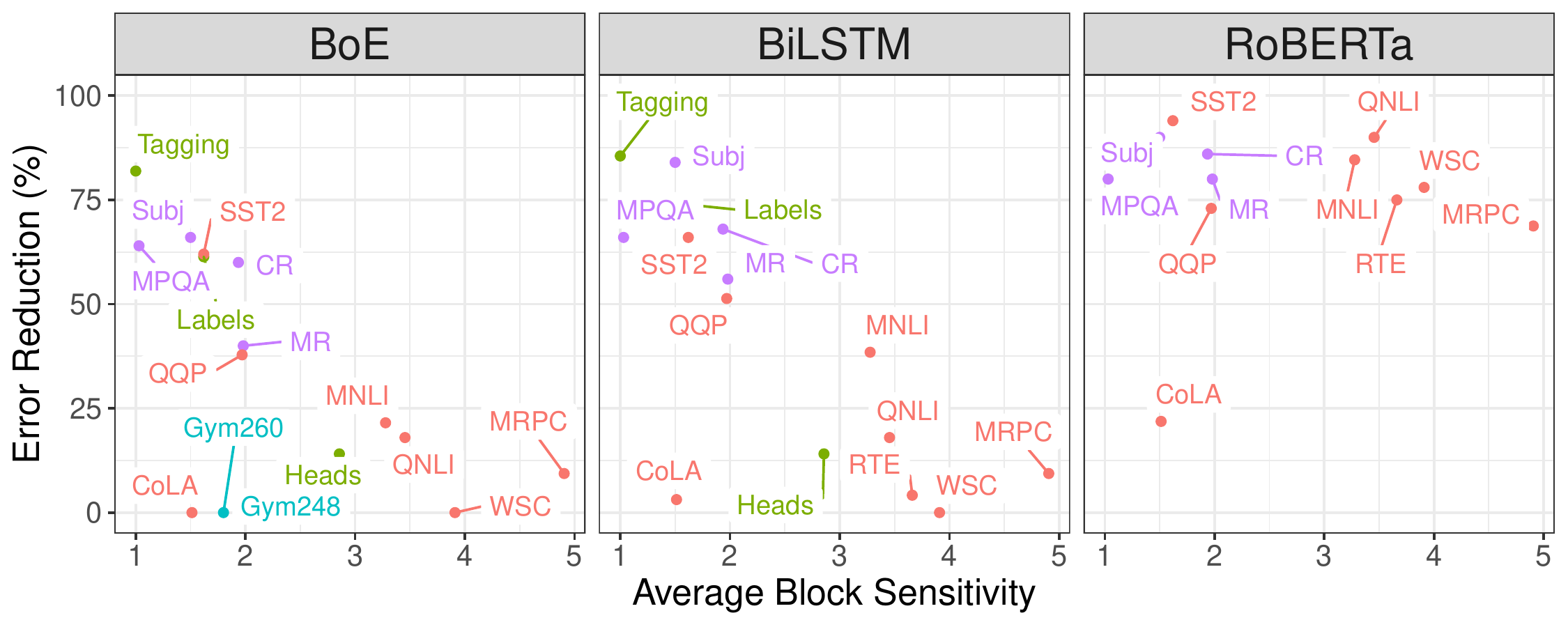}

    \includegraphics[width=0.48\textwidth]{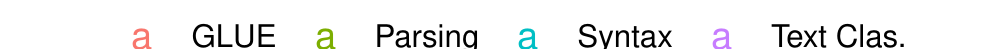}

	\caption{Sensitivity and simple models: average block sensitivity as estimated using XLNet (top) and u-PMLM (bottom) against error reduction (in \% of previously misclassified examples) of a Bag-of-Embeddings (BoE) model, a vanilla BiLSTM, and RoBERTa against the majority class baseline on the dev set.}
    \label{fig:sensitivity-cbow-acc}
\end{figure}

\begin{figure*}
\textbf{Premise:}

{\footnotesize
\setlength{\tabcolsep}{0.1em}
            \begin{tabular}{lllllll}
     & \textcolor{blue}{Steve Jobs} & was attacked by Sculley and & \textcolor{blue}{other Apple} & executives [...] and resigned from the company a few weeks later.  \\
    1. & \textcolor{red}{Chris Cook} & was attacked by Sculley and & other Apple & executives [...] and resigned from the company a few weeks later. \\
    2. & Steve Jobs & was attacked by Sculley and & \textcolor{red}{the other} & executives [...] and resigned from the company a few weeks later. \\
        \end{tabular}
        }

    \textbf{Hypothesis:}

    {\footnotesize
    \setlength{\tabcolsep}{0.1em}
        \begin{tabular}{lllllll}
    & \textcolor{blue}{Steve Jobs} & \textcolor{blue}{worked for} &  \textcolor{blue}{Apple}. \\
    3. & \textcolor{red}{Jobs  later} & worked for & Apple \\
     4. &Steve Jobs & \textcolor{red}{returned to} & Apple \\
     5. & Steve Jobs & worked for & \textcolor{red}{Google} \\
    \end{tabular}
    }
     
	\caption{An example from RTE, consisting of a premise and a hypothesis.
    In this example, the premise entails the hypothesis.
We show sensitive subsets $P_i$ identified by the models; for each of them, we show one of those completions created by XLNet that flip the label predicted by RoBERTa from \textsc{Entailment} to \textsc{NoEntailment}.
    In this example, five highly sensitive subsequences (two in the premise and three in the hypothesis) were identified.
    }
    \label{fig:ex-rte}
\end{figure*}

\begin{figure}
    \includegraphics[width=0.18\textwidth]{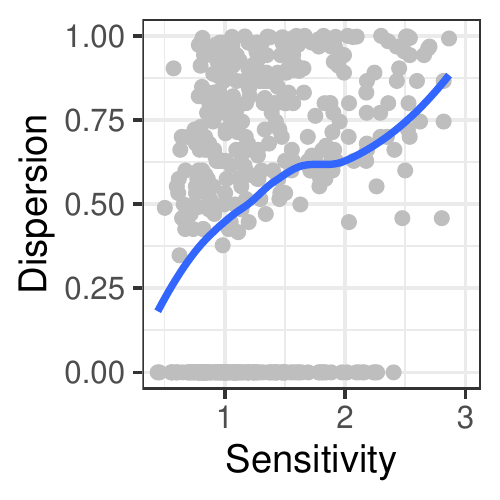}
    \includegraphics[width=0.28\textwidth]{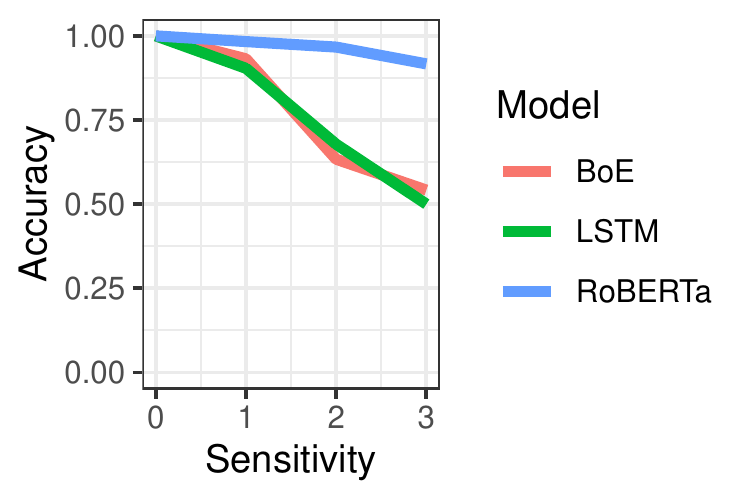}
	\caption{\textit{Left:} Block sensitivity and dispersion (see text) of sentiment labels of constituents. \textit{Right:} Accuracy as a function of sensitivity in sentiment analysis.}\label{fig:sst-sens}
\end{figure}

\paragraph{Average Block Sensitivity and Simple Models}

Based on Section~\ref{sec:bounds-models}, we hypothesized that tasks with low sensitivity correspond to those for which bag-of-words models can meaningfully outperform the majority class baseline, and those on which vanilla LSTM models do best.
In Figure~\ref{fig:sensitivity-cbow-acc}, we plot average block sensitivity against error reduction (in \% of previously misclassified examples) of a bag-of-embeddings (BoE) model\footnote{This model averages GloVE \citep{pennington2014glove} embeddings and applies a one-layer MLP to derive a prediction. This model is called CBOW in \citet{wang2019glue}; however, we apply BoE to concatenated spans in the case of multi-span tasks, in line with the definition of sensitivity.}, a vanilla BiLSTM\footnote{The syntax tasks have no training sets and we thus do not report BiLSTM results; we deduced necessarily at-chance performance for BoE from the design of the task. We excluded STS-B because it cannot be evaluated with accuracy.}, and RoBERTa against the majority class baseline, on the development sets.
BoE instantiates the model described in Proposition~\ref{prop:sensitivity-linear-model} with $k=1$; thus, we expect the top right of this graph to be empty for BoE: There can be no high-sensitivity task on which the BoE model provides strong quantitative performance.
For both BoE and the vanilla BiLSTM, average sensitivity was negatively associated with error reduction (XLNet: $R = -0.71$, $p=0.001$ for BoE; $R=-0.82$, $p=0.0002$ for BiLSTM. u-PMLM: $R=-0.66$, $p=0.005$ for BoE; $R=-0.76$, $p=0.002$ for BiLSTM), while no association was observed for RoBERTa (XLNet: $R=-0.05$, $p=0.87$; u-PMLM: $R=-0.07$, $p=0.84$).
We compared sensitivity as a predictor with label entropy, which showed little association with error reduction of either BoE or the vanilla BiLSTM (both $p > 0.1$).

\paragraph{Which inputs have high sensitivity?}
We used the Stanford Sentiment Treebank~\citep[SST-2,][]{socher2013recursive} to investigate which inputs have high sensitivity in sentiment classification.
We extracted the 445 dev inputs for which we had estimated sensitivity (determined by compute availability).
The dataset contains syntactic parses, with human sentiment annotation for each constituent.
We hypothesized that inputs have high sensitivity when different constituents have different sentiment.
We focus on estimates from XLNet for simplicity; results from u-PMLM are qualitatively identical.
We measured the dispersion of sentiment labels over constituents by enumerating positive ($+1$) and negative ($-1$) labels of all constituents, and computing the standard deviation of this resulting distribution; this is $1$ if as many constituents have positive sentiment as there are constituents with negative sentiment.
Figure~\ref{fig:sst-sens} (left) shows this dispersion measure as a function of sensitivity.
High-sensitivity examples have higher dispersion.
In a linear regression with dispersion and sentence length as predictors of sensitivity, dispersion was highly significant ($\beta = 0.53$, $p < 1.95\cdot 10^{-10}$), while length was not ($\beta = -0.00$, $p=0.49$).
This is illustrated by the examples in Figure~\ref{fig:ex-sst2} discussed above, where dispersion correlates with sensitivity:
The first example has low block sensitivity (0.93) and low label dispersion (0.0); the sentence is labeled positive and no constituent is labeled negative.
The second example has higher block sensitivity (1.88) and very high label dispersion (0.94): while the sentence is labeled positive, three constituents are labeled positive and five negative.

Second, we hypothesized that a BoE classifier and a vanilla LSTM perform better on low-sensitivity examples, whereas RoBERTa should provide better performance also on higher-sensitivity examples.
This is confirmed by Figure~\ref{fig:sst-sens} (right), where we show the accuracy of BoE, BiLSTM, and RoBERTa as a function of sensitivity.
In a logistic regression with sensitivity and sentence length as predictors of BoE accuracy, sensitivity was again highly significant ($\beta = -1.22$, $p = 4.1\cdot 10^{-10}$). 
Findings were similar for the BiLSTM ($\beta=-1.16$, $p=1.41\cdot 10^{-9}$). 
When predicting the accuracy of RoBERTa, there was still a measurable effect of sensitivity ($\beta=-1.37$, $p = 1.6\cdot 10^{-5}$), but overall Figure~\ref{fig:sst-sens} shows that RoBERTa provides more accurate predictions on higher-sensitivity input.
Sentence length was not a significant predictor for accuracy of any of the three models (all $p>0.05$).

If we choose $s(f,x)$ instead of $bs(f,x)$, i.e., restricting to singletons $P$, there is still a significant effect of $s(f,x)$ on BoE accuracy ($\beta=-1.24$, $p=1.1\cdot 10^{-6}$), but with inferior model fit compared to $bs(f,x)$ ($\Delta$Deviance $=20.0$), confirming block sensitivity as the more appropriate difficulty measure for simple NLP models.

\paragraph{Role of Task Model}
We have estimated sensitivity of GLUE and text classification tasks using a large pretrained transformer model (RoBERTa).
What would happen if we used a model outside of the family of massive pretrained contextual embeddings?
To answer this, we estimated $bs(f,x)$ on SST-2 and RTE using the vanilla BiLSTM to represent $f$.
On SST-2, sensitivity estimated with the BiLSTM's correlated with sensitivty estimated with RoBERTa on those inputs where the BiLSTM provides correct predictions ($R=0.36$, $p=2 \cdot 10^{-11}$), but not on those (typically higher-sensitivity ones) where its predictions are incorrect ($R=0.15$, $p=0.21$); a linear regression confirmed that RoBERTa's sensitivity was more predictive of the BiLSTM's sensitivity in those cases that the LSTM labeled correctly ($\beta=0.2$, $p=0.004$).
On RTE (where the BiLSTM's accuracy is at chance), the BiLSTM's sensitivity was at a constant low value ($\approx 0.5$) for all inputs.
This illustrates that automatic estimation of sensitivity requires a strong model that is able to achieve the sensitivity levels required by a task.

\paragraph{Role of Lower Bound Approximation}
We evaluated the role of the lower bound approximation on 20 inputs from SST-2 of between 8 and 11 words each -- long enough to make the approximation inexact but still allowing consideration of all $2^n$ subsets. 
We compared estimates of $bs(f,x)$ based on the approximation ($\leq 216$ subsets) and the full power set ($\leq 2^{11} = 2048$ subsets).
On average, the approximation decreased estimates of $b(f,x)$ from 1.59 to 1.35.
However, the two estimates were almost perfectly correlated ($R=0.95$, $p<10^{-10}$).
Even when restricting to singletons $P$ (up to 11 subsets), the correlation remained high ($R=0.81$, $p<0.0001$). 
Thus, while the approximation may underestimate the numerical values of $bs(f,x)$, it preserves the relative sensitivities of different inputs.

\section{Human Validation}\label{sec:human}

In Section~\ref{sec:measuring}, we estimated the sensitivity of NLP tasks by plugging a model $f'$ of the task $f$ into equation~\ref{eq:block-sens}.
This methodology requires that the model $f'$ provides good labels on the samples from $x^{\oplus P}$ obtained using the language models.
As the language models only approximate the input distribution, their samples could fall outside of the data distribution on which $f'$ approximates the true task $f$ at high accuracy.
If this were the case, high estimated sensitivity on tasks such as RTE might reflect brittleness of large models rather than true high sensitivity.
Here, we show that this is not the case:
Reasoning tasks like RTE have higher sensitivity than text classification tasks like SST-2, even when using human labels.

\subsection{Experiment 1: Validating Oracle Model}

For 60 items from SST2 and 30 items from RTE each, we collected the subsets $P_1, \dots, P_k$ achieving the maximum in~(\ref{eq:block-sens}), with 6 samples from XLNet for each subset (we collected fewer items from RTE because they typically have more sensitive subsets $P_i$, making annotation more expensive). 
We then recruited naive participants who labeled these samples; each sample was labeled by two or three annotators.
In addition to the appropriate labels (``positive'' and ``negative'' for SST-2, ``entails'' and ``does not entail'' for RTE), participants were also provided with a ``makes no sense'' option.
We repeated the study for SST2 both with and without finetuning. 

The rate of ``makes no sense'' responses on SST-2 was 18\% without finetuning and 11\% with finetuning; it was 12\% on RTE.
The agreement between RoBERTa and the modal human label was 80\% (without finetuning) and 85\% (with finetuning) on SST-2, and 72\% on RTE; compared to 87\%, 92\%, and 79\%, respectively average agreement between a single annotator and the modal label.
Interannotator agreement is below the human accuracies reported by \citet{nangia2019human}; we note that the creators of RTE specifically excluded items where human annotators did not agree \citep{dagan2009recognizing} and that SST-2 excludes reviews labeled as neutral~\citep{socher2013recursive}; we thus expect lower agreement on other strings from the same domain.

The key question is whether these levels of agreement guarantee consistent sensitivity estimates.
Figure~\ref{fig:expt1} (left) compares block sensitivity estimated using RoBERTa with values obtained by plugging in average human labels for the function $f(\cdot)$.
On both SST-2 and RTE, the values are strongly correlated (SST-2 with and without finetuning both $R=0.85$; RTE: $R=0.91$; all $p < 2.2\cdot 10^{-16}$). 
On RTE, human estimates are numerically lower than automatic estimates, but the difference in average sensitivity between SST-2 and RTE was strongly replicated by the human estimates ($\beta=1.3$, $p=1.3 \cdot 10^{-14}$ in a linear regression).
These results indicate that a strong model of a task leads to results similar to a human oracle. 
In particular, the qualitative difference in sensitivity between SST2 and RTE is replicated when using human labels.

\subsection{Experiment 2: Manual Approximation}
Experiment 1 showed that human and model labels yield similar results in estimating sensitivity.
However, we still relied on the subsets $P_i$ generated by the models.
Here, we show that sensitivity, both on the level of individual inputs and on the level of tasks, relates to human intuitions about the number of disjoint subsequences that can be changed to flip the label, which can be easily estimated without any model.

\definecolor{red}{HTML}{F8766D}
\definecolor{blue}{HTML}{00BFC4}

\definecolor{blue1}{HTML}{00BA38}
\definecolor{blue2}{HTML}{619CFF}

\begin{figure}
    \centering
\includegraphics[width=0.23\textwidth]{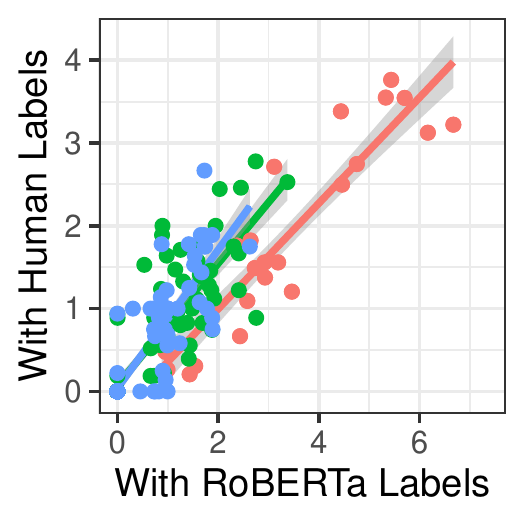}
\includegraphics[width=0.23\textwidth]{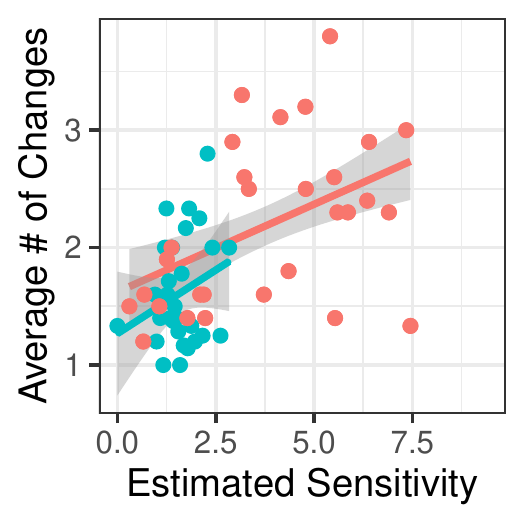}

\begin{center}
\begin{tabular}{llll}
	{\Huge{\textbf{\textcolor{red}{-}}}} \footnotesize{RTE}&
	{\Huge\textbf{\textcolor{blue2}{-}}} \footnotesize{SST-2 (raw)}\\
	\multicolumn{2}{c}{{\Huge{\textbf{\textcolor{blue1}{-}}}} \footnotesize{SST-2 (tuned)}} &
\end{tabular}
\begin{tabular}{llll}
	{\Huge\textbf{\textcolor{red}{-}}} \footnotesize{RTE}&
	{\Huge\textbf{\textcolor{blue}{-}}} \footnotesize{SST-2}\\
\end{tabular}
\end{center}

	\caption{Results of Experiments 1 and 2: \textit{Left:} Sensitivity on SST-2 and RTE calculated using RoBERTa's labels (x-axis) and using human labels (y-axis). On both tasks, both versions are highly correlated ($R > 0.8$ in both tasks). \textit{Right:} Results of Experiment 2: Average number of disjoint subsets on which participants change inputs to flip the label, as a function of estimated sensitivity on SST-2 and RTE.}
    \label{fig:expt1}\label{fig:expt2}
\end{figure}

We asked 30 naive subjects to find disjoint subsets in inputs from SST-2 and RTE such that changing the words in any one of them would flip the label.
Each subject worked on 30 items from one of the tasks.
Subjects rewrote sentences by clicking on words they wanted to replace and entering text replacing those.
After submitting a rewrite, subjects had the option of identifying another subset disjoint from the previously selected words.
Subjects changed at least one subset for every input, and were provided a bonus for every additional subset, incentivizing them to find as many disjoint subsequences as possible.
For both SST-2 and RTE, participants were shown an example with instructions guiding them to change three disjoint subsequences.
For RTE, we only allowed subjects to modify the premise, as similar changes are often possible in premise and hypothesis.

We interpreted the number of disjoint subsets found by participants as a proxy for block sensitivity.
This quantity is different from block sensitivity (\ref{eq:block-sens}), as it does not weight the relative probabilities of all possible changes, and we thus do not expect the same numerical values for both quantities.
An exact human estimate of block sensitivity would rely on asking humans both to create multiple samples from $x^{\oplus P}$ for different subsets $P$ and to then label these, infeasible given the large number of possible subsets of each input.
In contrast, the task described here only requires annotation from a few annotators for every input.

Figure~\ref{fig:expt2} (right) shows the average number of changes made on each input, as a function of the sensitivity estimated by XLNet+RoBERTa.
We conducted a mixed-effects Poisson regression of the number of changes made on the inputs, with random effects for items and subjects.
Sensitivity predicted the number of changes ($\beta = 0.061$, $SE=0.02$, $p=0.0023$), and there were overall more changes for RTE than for SST2 ($\beta=0.39$, $SE=0.097$, $p = 6\cdot 10^{-5}$).
Input length was not predictive ($\beta=-0.0015$, $SE=0.002$, $p=0.32$).
This result shows that a fully manual annotation task can approximate differences in sensitivity both between inputs (effect of sensitivity) and between tasks (effects of the contrast between RTE and SST-2).

\section{Discussion}
\label{sec:discussion}

We have proposed sensitivity as a theoretical framework for studying the complexity of sequence classification tasks, arguing that it captures complexity both across several machine learning architectures (Section~\ref{sec:bounds-models}) and across NLP tasks (Section~\ref{sec:measuring}).

Prior work has studied the ability of  RNNs and transformers to represent and learn languages in different classes of the Chomsky hierarchy \citep[e.g.,][]{merrill2019sequential}.
Sensitivity is orthogonal to the Chomsky hierarchy:
the maximally-sensitive function $f_{\text{Parity}}$ has a two-state finite automaton, but there are also low-sensitivity functions that are not even computable.
Sensitivity is also distinct from Kolmogorov complexity and similar description length measures~\citep{li1993an}: $f_\text{Parity}$ has high sensitivity but very low description length.
Whereas Kolmogorov complexity is uncomputable and can only be approximated asymptotically, sensitivity can be calculated for individual inputs, enabling us to explicitly evaluate it as a predictor of difficulty on NLP tasks.

\paragraph{Implications for NLP Practice}

Our results in Section~\ref{sec:measuring} suggest that pretrained contextualized embeddings have been so successful in NLP because they make it possible to learn high-sensitivity functions with modest amounts of task-specific training data.
We conjecture that, through large-scale pretraining, models implicitly learn high-sensitivity operations that are generally useful for language understanding. Finetuning such models for classification tasks~\citep{howard2018universal, peters2018deep, devlin2019bert} amounts to composing a high-sensitivity model with a low-sensitivity classifier.
Some classical techniques can also be interpreted in this light, such as aligning parse trees -- a potentially high-sensitivity computation -- and extracting features from these alignments that then are fed into an SVM -- a low-sensitivity classifier -- as an approach to tasks like RTE~\citep{dagan2009recognizing}.

\paragraph{Decision Boundaries in NLP}
The decision boundaries of NLP models are commonly studied to understand their linguistic knowledge \citep[e.g.][]{linzen2016assessing, marvin2018targeted, futrell2019neural, jeretic2020are}.
\citet{kaushik2020learning} and \citet{gardner2020evaluating} propose to improve NLP models and their evaluation by specifically considering input pairs that differ in some part and in their (true) label.
\citet{datta2020geometry} propose to quantify the difficulty of an input by the largest eigenvalue of the Fisher information matrix of a task model, finding that it predicts how sensitive classifiers are to word substitutions.

Sensitivity is different from widely studied phenomena of adversarial brittleness~\citep{szegedy2013intriguing, jia2017adversarial}:
The existence of adversarial examples typically means that natural examples have \emph{some} neighbors, possibly \emph{outside} of the input distribution, on which model output changes even though the true label does not.
In contrast, high sensitivity means that there are \emph{many} neighboring inputs \emph{within} the data distribution on which the \emph{true label} changes.
Sensitivity may be related to the observation that models often rely on spurious statistical patterns, such as simple lexical correlates of the label observed in reading comprehension datasets~\citep[e.g.][]{kaushik2018how,gururangan2018annotation}; we expect that such artifacts decrease task sensitivity as they make the gold labels correlated with the output of simple lexical classifiers.
Similarly, if the premise alone is predictive of the label in an entailment task~\citep{poliak2018hypothesis}, changing the hypothesis while staying within the task distribution is less likely to flip the label, again decreasing sensitivity.

\paragraph{Inductive Biases in Neural Networks}
There is empirical and theoretical evidence that the generalization capabilities of neural networks are in part due to a bias towards ``simple'' functions, with different formal notions of simplicity~\citep[e.g.][]{franco2006generalization,de2018deep,valle-perez2019deep}.
A few studies explicitly propose notions similar to low sensitivity as describing simplicity \citep{franco2006generalization,de2018deep,novak2018sensitivity}.
Relatedly, empirical work shows that neural networks learn low frequencies in the Fourier spectrum of functions first \citep{rahaman2019on,xu2019training,cao2019towards}.
As low average sensitivity corresponds to concentration of Fourier spectrum on low frequencies~\citep[Prop. 3.2]{odonnell2014analysis}, this can be understood as a bias towards low sensitivity.
One aspect distinguishing our results here from these prior studies is that we measure sensitivity of realistic functions arising as NLP tasks and on distributions reflecting the nontrivial statistics of natural language. Measuring sensitivity or Fourier spectra on other machine learning tasks is an interesting problem for future research.

\section{Conclusion}
\label{sec:conclusion}
We proposed block sensitivity as a complexity measure for functions from sequences to labels, applying the measure to quantify the complexity of sequence classification tasks in NLP.
Block sensitivity generalizes well-understood complexity measures from the theory of Boolean functions to the setting of natural language.
We showed both theoretically and empirically that low sensitivity characterizes tasks on which simple models without massive pretraining provide reasonable performance, and that, in such tasks, more difficult inputs correspond to those with high sensitivity.
Our results show that pretrained contextual embeddings enable models to learn tasks with higher sensitivity, and suggest designing challenging tasks by maximizing sensitivity,

\section*{Acknowledgments}
We thank Judith Degen, Kawin Ethayarajh, Mike Frank, Noah Goodman, and the members of the Stanford NLP group for helpful discussion and feedback.
We also thank the anonymous TACL reviewers for their insightful feedback that helped improve the paper.
We are also grateful to Yi Liao for generous help with running u-PMLM.

\bibliography{literature}
\bibliographystyle{acl_natbib}

\end{document}